\newtheorem{theorem}{Theorem}
\newtheorem{remark}{Remark}
\newcommand{\planning}[1]{\noindent   {\colorbox{Mahogany}{\color{White}
 PLANNING:} \color{Mahogany} #1 \normalcolor}}
\newenvironment{tightenumerate} % Defines the tightenumerate environment which modifies the enumerate environment to be more compact
{\vspace{-\topsep}\begin{enumerate}\itemsep1pt \parskip0pt \parsep1pt}
{\end{enumerate}\vspace{-\topsep}}
\begin{document}
\title{\LARGE \bf
Real-Time Trajectory Generation for Soft Robot Manipulators Using Differential Flatness}

%%%%%%%%%%%%%%%% Us:
 \author{Akua Dickson$^{1}$, Juan C. Pacheco Garcia$^{2}$, Ran Jing$^{2}$, Meredith L. Anderson$^{2}$, Andrew P. Sabelhaus$^{1,2}$% <- stops a space
 \thanks{This work was in part supported by the U.S. National Science Foundation under Award No. 2340111, 2209783, and a Graduate Research Fellowship.}
 \thanks{$^1$A. Dickson and A.P. Sabelhaus are with the Division of Systems Engineering, Boston University, Boston MA, USA {\tt\small \{akuad, asabelha\}@bu.edu}. }
 \thanks{$^2$J.C. Pacheco Garcia, R. Jing, M.L. Anderson, and A.P. Sabelhaus are with the Department of Mechanical Engineering, Boston University, Boston MA, USA. {\tt\small \{jcp29, rjing, merland\}@bu.edu}.}
 }

%%%%%%%%%%%%%%%%%% Double Blinded:
% \author{Authors Redacted For Peer Review$^{1}$% <- stops a space
% \thanks{$^{1}$Funding and affiliations redacted for peer review.}
% }

\maketitle
\pagestyle{empty}  % no page number for the second and the later pages
\thispagestyle{empty} % no page number for the first page

% \pagestyle{plain}  % add page number for the second and the later pages
% \thispagestyle{plain} % add page number for the first page

%%%%%%%%%%%
%% Force page numbers for a sanity check
% \thispagestyle{plain}
% \pagestyle{plain}
%%%%%%%%%%

\begin{abstract}
Soft robots have the potential to interact with sensitive environments and perform complex tasks effectively.
However, motion plans and trajectories for soft manipulators are challenging to calculate due to their deformable nature and nonlinear dynamics. 
This article introduces a fast real-time trajectory generation approach for soft robot manipulators, which creates dynamically-feasible motions for arbitrary kinematically-feasible paths of the robot's end effector.
Our insight is that piecewise constant curvature (PCC) dynamics models of soft robots can be differentially flat, therefore control inputs can be calculated algebraically rather than through a nonlinear differential equation.
We prove this flatness under certain conditions, with the curvatures of the robot as the flat outputs. 
Our two-step trajectory generation approach uses an inverse kinematics procedure to calculate a motion plan of robot curvatures per end-effector position, then, our flatness diffeomorphism generates corresponding control inputs that respect velocity.
We validate our approach through simulations of our representative soft robot manipulator along three different trajectories, demonstrating a margin of 23x faster than real-time at a frequency of 100 Hz.
This approach could allow fast verifiable replanning of soft robots' motions in safety-critical physical environments, crucial for deployment in the real world.
\end{abstract}

%%%% If writing a journal paper:
% \begin{IEEEkeywords}
% control
% \end{IEEEkeywords}

\section{INTRODUCTION}
\label{Sec:introduction}
Soft robot manipulators have the potential to outperform traditional rigid-bodied robot manipulators in delicate tasks, navigating complex environments, and interacting safely with humans \cite{2015RusDesign, Laschi2016Soft, Sanan2011Physical}.
To perform these tasks, motion plans and trajectories are needed for the soft robot's poses and corresponding control inputs, all of which must obey the physics of the deformable body. 
However, significant challenges arise in both computational tractability and generalizability of the trajectory generation problem, due to these high dimensional and highly nonlinear dynamics \cite{Della2023Model}.
To date, motion planning for soft robots has been limited to slow-but-accurate methods or fast-but-approximated methods, with little work addressing dynamic feasibility in real time.

\begin{figure}[ht]
\centering
\includegraphics[width=0.95\linewidth]{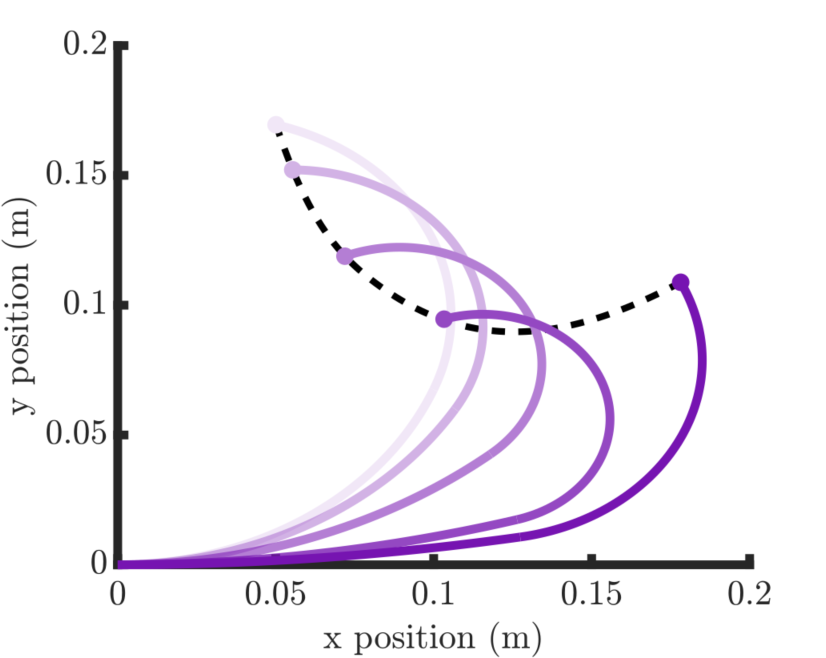}
\caption{The trajectory planning method in this manuscript generates dynamically feasible trajectories of a piecewise constant curvature soft robot manipulator (purple). A series of tip positions (black dotted lines) are tracked using a combination of fast inverse kinematics and differential flatness.} 
\label{fig:transparency}
\vspace{-0.3cm}
\end{figure}

This paper introduces an alternative approach for trajectory generation of soft robot manipulators.
We show that, under the common assumption of piecewise constant curvature (PCC) dynamics \cite{2020DellaModel-Based}, a soft robot manipulator is \textit{differentially flat}.
We then propose a motion planning problem combining inverse kinematics with control input calculations from differential flatness to generate arbitrary trajectories of the robot.
Our simulation validation shows that these motion plans are feasible under the robot's dynamics, and can be generated much faster than real-time (a margin of 23x faster at a frequency of 100 Hz).
This result could enable complex, verifiably-feasible motions of soft manipulators in time-critical environments.

%%%%% Drew cuts:

% are a class of robots designed with flexible materials such that they possess inherent flexibility and compliance. 
% ^^^ We're submitting to a soft robotics conference, so our audience will already know this / get this.

% Consequently, in this section, we review many approaches for trajectory generation, planning and control of soft robots in the soft robotics literature and propose a novel approach for trajectory generation that solves some of the limitations of prior work in this field of research. 
% ^^^ The intro to the intro should end with an exciting claim for readers to get interested. No need to say "in this section, we review..." since that's obvious, and doesn't describe \textit{why} we review such-and-such. It's easier to tell a story in the form of "people did this but it had this issue, and then people did this other thing but there's still an issue."

\subsection{Kinematic Trajectory Generation For Soft Robots}
\begin{comment}
\planning{Part 1: Planning with Kinematics only. or static equilibrium only (Discuss this approach and its limitations and add references that use this approach)} 
\end{comment}
% https://ieeexplore.ieee.org/abstract/document/7390203?casa_token=W-6nMx-BMP8AAAAA:4lv7iW9vpOpJjHPEYAsgsEPf_vxH4C101RagGFCXXpQ4U00fcBPnyeh2_reHupjXQy2kUa4 
% This citation talks about learning from demonstration for path planning but with kinematics 
%https://ieeexplore.ieee.org/abstract/document/9716747?casa_token=o9GeV3xn3vEAAAAA:mhIv8P95GdL1tWcqtSX6r0vyayaZD4bU-jqMeTDegOJM2FhHqCazJc95Q1NxDs7sbnFYxDPb
% I believe this is kinematics only, will need to check again

A number of techniques exist for trajectory generation and motion planning of soft robots, yet each suffer from some combination of limited assumptions about the robot, reliance on data, or computational challenges. 
% In particular, many methods exist for finding the pose of a soft robot that places its end effector in a desired position in space.
Kinematically or statically, given some assumptions \cite{webster2010design,Camarillo2009}, an inverse kineto-statics problem can be solved directly \cite{Wang2021Inverse}, or via convex optimization problems \cite{Keyvanara2023AGeometric, Lai2022Constrained,Bern2019Trajectory}, or via sample-based planning methods \cite{luo_efficient_2024,khan_control_2020}.
However, kinematic and static approaches to motion planning do not consider the dynamic feasibility of \textit{trajectories} of these poses, therefore may introduce inaccuracy or constraint violation during more general soft robot manipulation tasks \cite{renda_screw-based_2017,renda_discrete_2018,boyer_dynamics_2020}
% , and so cannot plan motions with velocities, impacts, or dynamics constraints.
% , or generate kinematically-feasible motions which are not dynamically feasible.

%%%%%% Drew cuts:
% is an active area of research due to the unique challenges posed by their soft, deformable structures
% Several approaches to soft robot trajectory generation and control have been presented in soft robotics literature, which we discuss in detail below.
% ^^^ doesn't give much extra info. If it wasn't an active area of research then why did we even write a paper about it? Instead, focus on the limitations of the state of the art. In other words - a literature review isn't a list of related work, but a story about how ours is different.

% The convex optimization problem may not have unique solutions
% ^^^ don't convex optimization problems always have a unique solution?

% Kinematics-based planning and control techniques have been introduced in prior work, where a quadratic optimization problem computes the changes in the joint angles that minimize the error between the generated end-effector positions and the desired end-effector positions. The convex optimization problem may not have unique solutions and this approach becomes computationally expensive especially with increasing number of segments \cite{Keyvanara2023AGeometric, Lai2022Constrained}. 
% ^^^ too detailed.

Similar concerns exist for discretized models which rely on static equilibrium, such as finite element methods (FEM/FEA).
Statics-based FEM can plan for complex deformations of a soft robot in 3D, in real-time with careful numerical choices \cite{duriez_control_2013,bern_soft_2021} or dimensionality reduction \cite{Ménager2023Direct, Largilliere2015Real-time, Coevoet2019Soft} or data-driven machine learning \cite{Terrile2023Use}.
Yet these real-time methods also ignore the dynamics of the soft robot.

\subsection{Dynamic Trajectories and Feedback Control for Soft Robot Manipulators}

Motion plans and control trajectories can also be generated with the soft robots' dynamics, but experience challenges in computation or approximations.
Two common dynamics models for soft robots, the strain-parameterized elastic rod and the Cosserat rod, both require solving partial differential equations.
Due to the subsequent computational load, most control or planning under these dynamics models occurs in simulation \cite{doroudchi_configuration_2021,renda_dynamics_2024,chang_energy_2020,caasenbrood_energy-shaping_2022}, released from the constraint of real-time computation.
These approaches may more accurately be called \textit{trajectory generation} rather than feedback control \cite{rao_trajectory_2014}, as they are not tested for online calculation.
Hardware implementation of Cosserat-rod based feedback control has been limited to simple robots only \cite{doroudchi_implementation_2023}.
% with computational concerns as the number of segments increases.
In contrast, our approach is capable of calculating entire motion plans in microseconds, many times faster than when solving the Cosserat rod PDE \cite{till_real-time_2019}.
% at one timestep.

% \footnote{Till et al. demonstrate 200 solves of the Cosserat Rod PDE in 350 milliseconds (1.75 ms/solve), compared to our result of 0.45 ms per plan.}.

%Trajectory Optimization Techniques

Alternatively, dynamics that map to the manipulator equation either through the Piecewise Constant Curvature (PCC) approximation \cite{della2020imporved,Sanders2023Dynamically} or a ``pseudo-rigid'' discretization into many small rigid links \cite{Wertz2022Trajectory} can trade some loss of accuracy for the benefits of ordinary differential equations (ODEs).
Both approaches have been validated as sufficiently accurate in hardware prototypes \cite{Della2023Model}, given certain conditions -- for example, the manipulator must be inextensible, with no transverse shear. 
These dynamics computations can be fast enough for one-step-ahead feedback even with advanced controllers \cite{Hyatt2020Model,patterson_robust_2022,patterson_safe_2024}.
Our contribution starts with the PCC model and provides a dramatic increase in computational speed, generating long-horizon trajectories much faster than real-time.

Finally, much of the recent success in real-time dynamic soft robot control has come through machine learning \cite{chen_data-driven_2024} or similar dimensionality-reduction techniques.
Finite element methods with dynamics can be made real-time via model-order reduction \cite{Wu2021FEM-based,tonkens_soft_2021,alora_data-driven_2023,Ménager2024Condensed}, yet are inherently an approximation with less physical insight or confidence than a first-principles model.
Data-driven techniques such as the Koopman Operator \cite{bruder_data-driven_2020,haggerty_control_2023,wang_improved_2023} or different varieties of neural networks \cite{thuruthel_model-based_2019,sun_physics-informed_2022} can be carefully used in real-time, but rely on quality of data and suffer in regions of the state space where no training data exists.
Our proposed method provides a ready physical interpretation (PCC), does not rely on data, and has a significant speedup in computation time.

%Model Predictive Control Techniques
% An alternative dynamics-based approach to trajectory generation and control for soft robots is Model Predictive Control (MPC) \cite{Bruder2019Modeling, Hyatt2020Model}. MPC involves the formulation of an optimization problem at each time-step which takes into account the system's current state, future predicted states, and control inputs. For soft robots, where the dynamics are highly nonlinear and involve many degrees of freedom, this optimization problem becomes highly computationally intensive \cite{Wu2021FEM-based}. Given the complexity of our soft robot dynamics, real-time implementation of MPC is hard, as solving the required optimization problem within a short time window for real-time control is impractical for these soft robots.

% ^^^ Drew removed this whole paragraph as well, combined into one sentence in second paragraph. "MPC is not fast enough" isn't necessarily a big part of our story, since people have done it fast enough for real time.

%Sampling-based approaches like RRT, RRT* and Probablistic roadmaps

% \planning{Part 2: Numerical Optimization approaches (Discuss this approach and its limitations and add references that use this approach. NOTE: Drew's work with Wertz in 2022 is one good reference to add)}
% also in this section: pde approaches are bad because pdes (renda). FEA is bad for reasons, same as with Jim Bern's work and marco pavone. too detailed for us (Const curv works fine).

% https://onlinelibrary.wiley.com/doi/full/10.1002/aisy.202200163
% Drew's frog paper - could be used here maybe 
    
\subsection{Differential Flatness in Robotics}
\begin{comment}
\planning{
    Part 1: Start by discussing literature on Differential Flatness for rigid robots and how beneficial it has been in the control and motion planning techniques of rigid robots (Elaborate on this and add references that highlight this point)

    Part 2: Talk about the PCC assumption and how work by Della Santina et al has introduced dynamics modelling techniques that represent the soft robot as a rigid robot equivalent. (Elaborate on this and add references that highlight this point)
    
    Part 3: Talk about how with this PCC assumption and dynamics modelling in mind, we explore differential flatness for soft robots in our work.}
\end{comment}

The primary insight for this manuscript is that the PCC dynamics of a soft robot manipulator can be \textit{differentially flat}.
Differential flatness is a property of some dynamical systems \cite{sira2004differentially} that allows their states and inputs to be written as an algebraic function of others, called the \textit{flat outputs}, and the flat outputs' derivatives.
Feasible trajectories can then be computed purely kinematically \cite{van_nieuwstadt_real_1997}, no ODE required.
The resulting trajectory generation problem can be very fast-to-compute, and has revolutionized the control of high-frequency feedback systems such as quadrotors \cite{Mellinger2011Minimum,  poultney2018Robust}, aerial manipulators \cite{ yuksel2016differential}, and robot arms \cite{Tonan2024Motion, Franch2010Differential, rouchon1993flatness}.
Whether used online for control or offline for trajectory generation, differential flatness is one of the fastest methods for calculating feasible inputs in nonlinear systems \cite{sun_comparative_2022}.

%%%%% Drew cuts:

% Consequently, the number of flat outputs chosen must be equal to the number of control inputs for this property to hold \cite{sira2004differentially}. 
% ^^^ an important fact, but not related to the rest of this paragraph ("if you take it out, does the paragraph still make sense?" -yes)

% where all the desired smooth trajectories with reasonably bounded derivatives in the flat output space are accurately tracked by the robotic systems. 
% ^^^ true, but not super relevant, and also not aligned with our audience (RoboSoft is not as much control theorists).

% We build on the ideas presented in these articles in order to apply differential flatness to our soft robot manipulators.
% ^^^ replaced by first sentence of the above paragraph
 
\subsection{Paper Contributions}

This article considers: if rigid robot manipulators are differentially flat, and if the equations of motion of a PCC soft robot can be written as a variation of the rigid manipulator dynamics, then are soft robot manipulators also differentially flat?
We show the answer is \textit{yes}.
In total, we contribute:

% In this paper, we combine elements of the techniques discussed above, namely, differential flatness-based trajectory generation and control, as well as PCC modeling of soft robot manipulators as the equivalent of a rigid robot with several rigid links. This article makes the following important contributions to soft robotics literature:

\begin{itemize}
\item A proof-by-construction that piecewise constant curvature soft robot dynamics can be \textit{differentially flat},
\item A real-time trajectory generation approach using this flatness property, and
\item A demonstration that our approach plans soft robot manipulator trajectories 23x faster than real-time.
\end{itemize} 

Unlike kinematic or static methods, our trajectories are dynamically feasible by construction.
And unlike model order reduction, machine learning, or the Koopman Operator, our approach is exact: flatness gives a one-to-one mapping that is valid whenever the PCC robot dynamics are valid.

\section{PIECEWISE CONSTANT CURVATURE SOFT ROBOT DYNAMICS}
\label{sec2:piecewise}

Our approach relies on the kinematics and dynamics of the piecewise constant curvature assumption in a soft robot limb.
We first review that model from \cite{2020DellaModel-Based,Della2023Model} before progressing to the flatness proof and planning approach.

\begin{figure}[tb]
\centering
\includegraphics[width=1.05\linewidth]{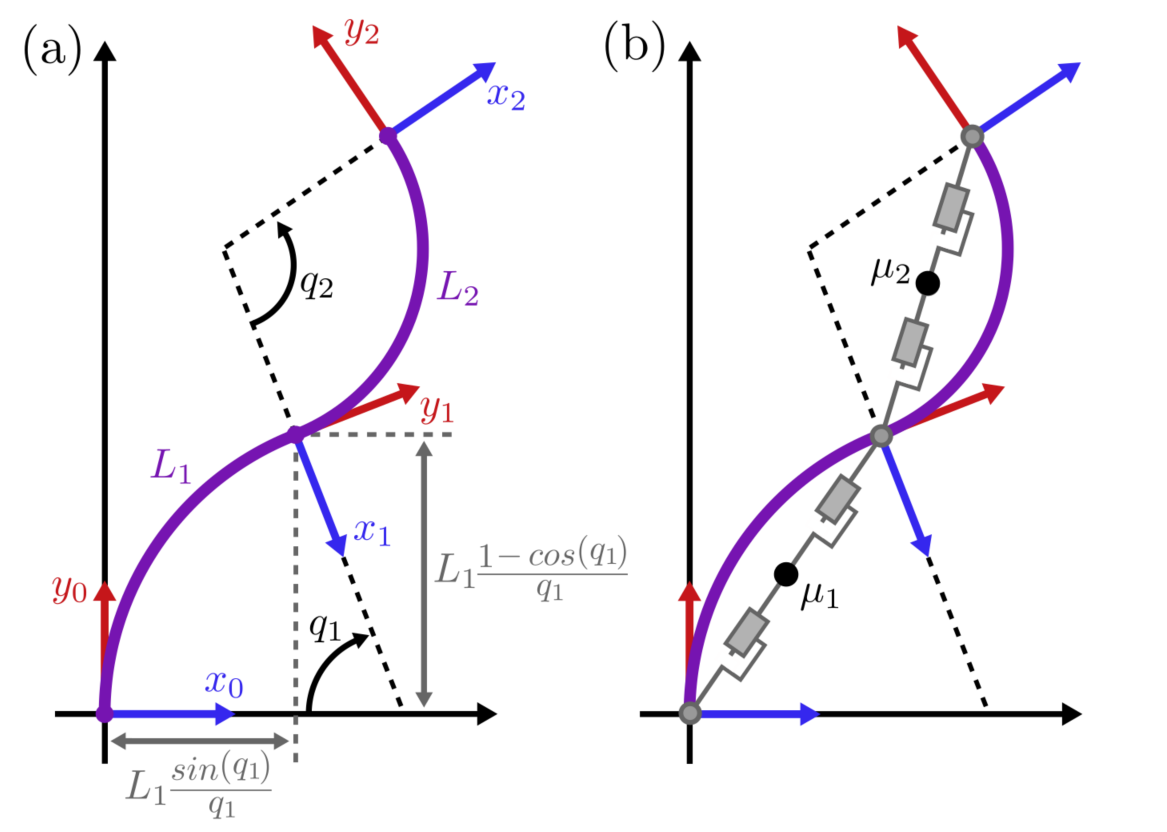}
\caption{(a) A piecewise constant curvature soft robot manipulator has kinematics defined by its subtended angles $q_i$ and lengths $L_i$. (b) The center of mass dynamics of a PCC robot can be matched to a constrained C.o.M., $\mu$, of rigid manipulator per \cite{della2020imporved}. Here, we use the RPPR representation.} 
\label{fig:rigid/softsegments}
\vspace{-0.3cm}
\end{figure}

% In this section, we detail the kinematics assumptions of our robot manipulator. Then, we review the result from Della Santina et al that demonstrates how  ordinary differential equations can be obtained for our soft robot's dynamics \cite{2020DellaModel-Based}.

% \fix{copied from intro - merge into sec II.
% Soft robots are inherently difficult to model due to their flexible, deformable structures and the nonlinear, complex behaviors that arise from their materials and actuation methods. In order to compensate for these modeling challenges, the Piecewise Constant Curvature (PCC) approach was introduced. PCC simplifies the modeling of soft robots by approximating their deformation as a series of constant-curvature segments. Rather than attempting to model the continuous, highly flexible body of the soft robot in its entirety, PCC breaks the structure into multiple sections, each of which bends in a constant curvature \cite{webster2010design}. The work of Della Santina et al \cite{2020DellaModel-Based} builds on this idea to further model the PCC soft robot manipulator using an equivalent rigid robot model which is composed of many rigid links. }

\subsection{Kinematics Model}

The assumption that segments of a soft robot arm are constant-radius arcs (the PCC model, Fig. (\ref{fig:rigid/softsegments})) has seen much success \cite{webster2010design}.
Under this assumption, the robot's pose becomes a function of a finite number of states, amenable to planning.

% In this section, we explore the kinematics model of the piecewise constant curvature (PCC) soft robot shown in figure \ref{fig:rigid/softsegments}. The piecewise constant curvature (PCC) model simplifies the kinematics of soft robots by dividing the manipulator into several segments, each assumed to maintain a constant curvature throughout its length. The PCC model requires defining the number of segments and the curvature of each segment. The overall shape of the soft manipulator is then determined by the sequence of curvatures and segment lengths. This parameterization reduces the infinite-dimensional configuration space of a continuum robot to a finite-dimensional space, making it suitable for control and optimization.

We consider a PCC soft robot which comprises $n$ constant curvature segments, each with a reference frame at base and tip. 
The robot's kinematics are consequently defined in closed-form by a series of $n$ transformation matrices $T_0^1, \cdots, T_{n-1}^n$ which maps each reference frame to a successive one. 
This work considers a planar robot, where $T_{i-1}^i$ is defined using the curvature $q_i$ and length $L_i$ of each segment (Fig. \ref{fig:rigid/softsegments}(a)) as
% where the additional degree of freedom which is the bending plane angle $\phi$ is not considered.

% The $i$th transformation matrix $T_{i-1}^i$ is given as:
\begin{equation}
\label{eq:forwardkinematics}
T_{i-1}^i = \begin{bmatrix}
\cos(q_i) & -\sin(q_i) & L_i \frac{\sin(q_i)}{q_i}\\
\sin(q_i) & \cos(q_i) & L_i \frac{1-\cos(q_i)}{q_i}\\ 0&0&1
\end{bmatrix}
\end{equation}

% where $L_i$ is the length of the $i$-th segment, and $q_i$ is the $i$th joint angle. 

\subsection{Dynamics Model}
% In this section, we discuss the dynamics model of the PCC soft robot presented in this paper. The aim is to model the PCC soft robot to be equivalent to a corresponding rigid robot configuration, such that control techniques and principles such as differential flatness, which were previously only explored for rigid robots can be considered for and applied to soft robots. 

The work of Della Santina et al. proposes that the motion of one PCC arc can be modeled via a series of rigid rotational and prismatic joints, constrained such that their combined center of mass has the same dynamics as the PCC arc's center of mass \cite{2020DellaModel-Based}.
% Figure (\ref{fig:rigid/softsegments}) shows concept in detail.  
% Therefore in the case of a two-segment constant curvature (CC) soft robot limb, our equivalent rigid robot representation would be RPPR-RPPR as shown in Figure (\ref{fig:rigid/softsegments}).
To derive this model, let the rigid robot configuration be $\xi \in \mathbb{R}^{nh}$, where $h$ is the number of rigid joints per each CC segment. 
The PCC configuration states $q \in \mathbb{R}^{n}$ are derived by constraining the rigid robot configuration through a continuously differentiable mapping:

\begin{equation}
    m: \mathbb{R}^{n} \longmapsto \mathbb{R}^{nh}
\end{equation}

The mapping $\xi = m(q)$ must ensure that kinematics and centers of mass of each CC soft segment via eqn. (\ref{eq:forwardkinematics}) aligns with the equivalent points of the rigid joints. 
For this, we adopt the RPPR configuration from \cite{2020DellaModel-Based} where a single CC segment is modeled as an RPPR manipulator (Fig. \ref{fig:rigid/softsegments}(b)),

\begin{equation}
m(q) = \begin{bmatrix}
m_1(q_1)^T&m_2(q_2)^T&\cdots&m_n(q_n)^T 
\end{bmatrix}^T   
\end{equation}

\noindent where each of the $m_i(q_i)$ are:

\begin{equation}
\label{eq:m_i(q)}
m_i(q_i) = \begin{bmatrix}
\frac{q_i}{2}\\L_i\frac{sin(\frac{q_i}{2})}{q_i}\\L_i\frac{sin(\frac{q_i}{2})}{q_i}\\ \frac{q_i}{2}    
\end{bmatrix}.
\end{equation}

\noindent We remark that, as observed in \cite{2020DellaModel-Based}, the RPPR $m(q)$ can be substituted for an RPRPR for an improvement (future work).

We start with the usual dynamics of the rigid robot:
\begin{equation}
\label{xi_dynamics}
B_\xi(\xi)\ddot{\xi} + C_\xi(\xi,\dot{\xi})\dot{\xi} + G_\xi(\xi) = J_\xi(\xi)^\top f_{ext}
\end{equation}
where $\xi$, $\dot{\xi}$ and $\ddot{\xi}$ $ \in \mathbb{R}^{nh}$ is the robot pose (joint angles and extensions) and its derivatives, $B_\xi(\xi) \in \mathbb{R}^{nh\times nh}$ is the rigid robot's inertia matrix, $C_\xi(\xi,\dot{\xi}) \in \mathbb{R}^{nh}$ is the Coriolis matrix, and $G_\xi(\xi) \in \mathbb{R}^{nh}$ is the gravity vector.
The generalized applied forces $f_{ext}$ will contain spring and damping terms as well as a control input \cite{2020DellaModel-Based}.

% generalized input $\tau$

% The rigid robot is subject to an external force which is mapped through the Jacobian $J_\xi$.

% Using the mapping $\xi = m(q)$, the dynamics of the CC soft robot can be evaluated given the dynamics of the rigid robot formulation. We discuss this in detail below. 

Applying $\xi = m(q)$ can be done per the chain rule.
Let the Jacobian of $m$ be $J_m(q):\mathbb{R}^{n} \mapsto \mathbb{R}^{nh \times n}$. 
Then the rigid robot configuration and its derivatives $\xi$, $\dot{\xi}$ and $\ddot{\xi}$, in terms of the CC soft segment states and derivatives, $q$, $\dot{q}$ and $\ddot{q}$, are

\begin{subequations}
\label{eq:xi=m(q)}
    \begin{align}
        \xi &= m(q)\\
        \dot{\xi} &= J_m(q)\dot{q} \\
        \ddot{\xi} &= \dot{J}_m(q,\dot{q})\dot{q} + J_m(q)\ddot{q}
    \end{align}
\end{subequations}

\noindent The Jacobian $J_m(q)$ is obtained by calculating $\frac{\partial m}{\partial q}$.
For the RPPR $m_i(q)$ of \eqref{eq:m_i(q)},

\begin{equation}
J_{m,i} = \begin{bmatrix}
    \frac{1}{2}&L_{m,i}&L_{m,i}&\frac{1}{2}
\end{bmatrix}^T
\end{equation}
where $L_{m,i} = L_i \frac{q_icos(\frac{q_i}{2})-2sin(\frac{q_i}{2})}{2q_i^2}$.

The dynamics of the PCC soft robot are then a simple substitution of \eqref{eq:xi=m(q)} into \eqref{xi_dynamics}:

% \begin{equation}
% \begin{aligned}
% & B_\xi(m(q))(\dot{J}_m(q,\dot{q})\dot{q} + J_m(q)\ddot{q}) \\  & + C_\xi(m(q), J_m(q)\dot{q}) J_m(q)\dot{q} + G_\xi(m(q)) = J_\xi^\top(m(q))f_{ext},
% \end{aligned}
% \end{equation}

\begin{equation}
\begin{aligned}
& B_\xi(m(q))(\dot{J}_m(q,\dot{q})\dot{q} + J_m(q)\ddot{q}) \\  & + C_\xi(m(q), J_m(q)\dot{q}) J_m(q)\dot{q} \\  & + G_\xi(m(q)) = J_\xi^\top(m(q))f_{ext},
\end{aligned}
\end{equation}

\noindent written more concisely in $\mathbb{R}^n$ by left-multiplication with $J_m^\top$,

\begin{equation}
B(q)\ddot{q} + C(q,\dot{q})\dot{q} + G(q) = J^\top(q) f_{ext}
\end{equation}

\noindent where

\begin{equation}\label{eqn:BCGJ_q}
\begin{aligned}
B(q) = & J_m^T(q)B_\xi(m(q))J_m(q) \\
C(q,\dot{q}) = & J_m^T(q)B_\xi(m(q))\dot{J}_m(q,\dot{q}) +J_m^T(q) \\
&C_\xi(m(q),J_m(q)\dot{q})J_m(q) \\
G(q) = & J_m^T(q)G_\xi(m(q))\\
J^T(q) = & J_m^T(q)J_\xi^T(m(q))
\end{aligned}
\end{equation}

% $B(q) \in \mathbb{R}^{n\times n}$, $C(q,\dot{q}) \in \mathbb{R}^{n\times n}$, and $G(q)\in \mathbb{R}^{n}$

\noindent As in \cite{2020DellaModel-Based}, we factor out $q$-space stiffness and damping from the applied force, with matrices $K, D \in \mathbb{R}^{n \times n} \succ 0$, and remove gravity since our robot is planar.
In addition, we consider an input generalized force $\tau$ within $f_{ext}$, with no external contact, obtaining a similar result from \cite{2020DellaModel-Based},

% To implement trajectory tracking tasks in the soft robot's state space $q$ we can rewrite our soft robot dynamics including a damping, $D \in \mathbb{R}^{n\times n}$ and stiffness matrix, $K \in \mathbb{R}^{n\times n}$. The stiffness matrix, $K$ is introduced to ensure desired elastic properties or stiffness behaviour in response to various external disturbances while the damping matrix, $D$ is introduced to ensure that the soft robot does not overshoot or oscillate excessively when transitioning between states or tracking trajectories. \fix{Drew please do you think I should rephrase this paragraph} Our CC soft robot dynamics can therefore be written as:

\begin{equation}
\label{eq:q_dynamics}
B(q)\ddot{q} + C(q,\dot{q})\dot{q}+Kq +D\dot{q} = J^\top(q) \tau
\end{equation}

\subsection{Input Model}

The physical interpretation and realizability of $\tau \in \mathbb{R}^n$ in eqn. (\ref{eq:q_dynamics}) is not obvious.
The approach from \cite{2020DellaModel-Based} treats $\tau_i$ for the $i$-th CC segment as a pair of internal torques at the $i$-th and $(i+1)$-th reference frames.
Here, it is assumed that the robot is fully actuated: there is one bidirectional input (e.g., antagonistic pneumatic chambers) per CC segment.
More importantly, they show that $\tau_i$ can be calibrated by a least-squares fit against a linear displacement of a pneumatic piston, $u_i$, as $\tau_i = \lambda_i u_i$, and that this curve fit accurately replaces $J^\top = J_m^\top J_\xi^\top$.
Combining these results implies $J^\top(q)\tau = J_\lambda^\top u$, where $J_\lambda = \text{diag}(\lambda_1, \hdots, \lambda_n)$ is a diagonal matrix whose entries are fit via hardware experiments.

Our envisioned hardware platform (see Sec. \ref{sec:discusion}) uses a similar pressure-controlled pneumatic system per CC segment, fully actuated, so we hypothesize that it is possible to calibrate $\lambda_i$ in future work.
This manuscript's simulations use $J_\lambda = I$.

% The physical interpretation of our control input $\tau$ is the pressure in the valves of our PCC soft robot. Della Santina et al conducted several experiments to validate the PCC model we adapt in this work. Through experimentation, they discovered a linear relationship between their PCC pneumatic soft robot's piston length and the control input $\tau$. This relationship is given by the equation $\tau_i=\lambda_i*l_i$ for the $i$-th soft robot segment. Where $\lambda$ is a gain obtained through experimentation and $l$ is the piston length. We adapt a similar notion for our soft robot and we notice that our control input $\tau$ is inversely proportional to the pressure in the valves. We represent this relationship as follows:
% \begin{equation}
% \tau_i = \frac{\lambda_i}{\rho_i}
% \end{equation}
% Where $\rho_i$ is the pressure in the valves of the $i$-th PCC soft segment.
% Similar to the work of Della Santina et al, we can therefore derive our gain $\lambda$ through experimentation. However, this is beyond the scope of this article and the authors leave this experimental validation as future work. 

% Given these dynamics for the CC soft robot, we explore the idea of differential flatness and whether the concept applies to soft robot manipulators in subsequent sections.

\section{SOFT ROBOTS CAN BE DIFFERENTIALLY FLAT}
\label{sec3:soft}

In this section, we show a main result of this article: the dynamics of a PCC soft robot can be differentially flat, so the states and the input can be written as algebraic functions of the carefully chosen flat outputs and their derivatives.

\subsection{Differential Flatness}
\label{sec:differentialflatness}
A system of states $x\in \mathbb{R}^n$ and inputs $u \in \mathbb{R}^m$ is differentially flat if there exists a set of appropriately-smooth flat outputs $y = h(x,u,\dot{u},\ddot{u},...,u^{(a)}) \in \mathbb{R}^m$ such that:

\begin{align}
    x &= f(y,\dot{y},\ddot{y},...,y^{(b)})  \label{eq:f} \\
    u &= g(y,\dot{y},\ddot{y},...,y^{(c)}) \label{eq:g}
\end{align}

Notice that, if a system is differentially flat and $f$ and $g$ are known, one can specify an arbitrary trajectory of $y(t)$ then immediately calculate the corresponding states $x(t)$ and inputs $u(t)$ to satisfy that trajectory, via eqns. (\ref{eq:f})-(\ref{eq:g}).
The resulting trajectory is dynamically feasible by construction.

\subsection{Flat Output Selection}
\label{section:flatoutputselection}

Choosing a candidate set of flat outputs, $y$, is not trivial.
For our dynamics of eqn. (\ref{eq:q_dynamics}) with states $x = [q^\top \; \dot q^\top]^\top$ and inputs $u = \tau$, there is no general procedure to show flatness.

However, we observe that (some) rigid robot manipulators are differentially flat \cite{sira2004differentially} with the choice of joint angles as the flat outputs, i.e., $y = \xi$ from eqn. (\ref{xi_dynamics}).
We therefore hypothesize that, for the same reasons as the PCC dynamics can be written as a constrained rigid manipulator, our flat outputs could also be the constrained pose: i.e., $y = q$, with $q \in \mathbb{R}^{n}$ the angles of each constant curvature segment.

We consider eqn. (\ref{eq:q_dynamics}) and prove the following:

%%% Drew cuts:

%Choosing a set of suitable flat outputs for the robotic system is not trivial as there does not exist any systematic and deterministic approach, except when considering cases of linear systems and affine nonlinear single input systems. As such, in most instances of highly non-linear systems, flat output determination is done through inspection, educated guessing and physical intuition. This process can be likened to the process of determining suitable Lyapunov functions for stability assessment of nonlinear systems.
% ^^^ this is all true, and would be great for a discussion in a disseration or book, but is too much of a "side note" for a space-constrained manuscript.

% %The input to the robotic system is $\tau$ and the states are given by the joint angles of the multi-segment soft robot i.e $q_i$ where $i = 1, ..., n$ and n is the total number of soft segments in the PCC soft robot.
% For the choice of flat outputs for our PCC soft robot manipulator, we consider previous work on differential flatness and flat output selection for rigid robot manipulators \cite{sira2004differentially}. Employing ideas from this work, our choice of flat outputs is the joint angles of the PCC soft robot manipulator, $q \in \mathbb{R}^{n}$.
%  Our choice of flat outputs are the joint angles of the PCC soft robot.
% ^^^ These were a bit repetitive.

\subsection{Differential Flatness for PCC Soft Robot Dynamics}

\begin{theorem}\label{thm:df}
    For a soft robot manipulator, if:
    \begin{tightenumerate}
        \item The dynamics of the piecewise constant curvature rigid approximation in (\ref{xi_dynamics})-(\ref{eq:q_dynamics}) hold,
        \item there is one (bidirectional) actuator per PCC segment,
    \end{tightenumerate} 
    Then it is differentially flat with the flat outputs as $y=q$. If in addition, (3) a smooth trajectory of flat outputs $y(t)=q(t)$ does not include poses with singularities, then the corresponding state-input trajectory $\{x(t),u(t)\} = \{f(y^{(\cdot)}(t)), g(y^{(\cdot)}(t))\}$ is smooth and dynamically feasible.
\end{theorem}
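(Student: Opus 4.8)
The plan is a proof by construction: I would exhibit the maps $f$ and $g$ of \eqref{eq:f}--\eqref{eq:g} in closed form, verify they meet the definition in Sec.~\ref{sec:differentialflatness}, and then handle smoothness and feasibility as a separate final step. The state map is immediate: with flat output $y=q$ and state $x=[q^\top\;\dot q^\top]^\top$, a single differentiation gives $\dot y=\dot q$, so I can set $x=f(y,\dot y)=[y^\top\;\dot y^\top]^\top$ with no inversion or algebra, and $f$ is smooth wherever $y(t)$ is once differentiable.

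The substantive construction is the input map $g$. I would keep the inertial, Coriolis, elastic, and damping terms of \eqref{eq:q_dynamics} on the left, substitute $q=y$, $\dot q=\dot y$, $\ddot q=\ddot y$, and invert the force-coupling matrix on the right to solve for the input $u=\tau$. Assumption~(2) is exactly what makes that coupling square: one bidirectional actuator per segment gives $\tau\in\mathbb{R}^n$, hence $J^\top(q)\in\mathbb{R}^{n\times n}$. Inverting it yields
\begin{equation}
u=g(y,\dot y,\ddot y)=\big(J^\top(y)\big)^{-1}\big[\,B(y)\,\ddot y+C(y,\dot y)\,\dot y+Ky+D\dot y\,\big],
\end{equation}
an algebraic function of $y$, $\dot y$, and $\ddot y$ alone, which together with $f$ satisfies \eqref{eq:f}--\eqref{eq:g} and establishes flatness with $y=q$. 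I would emphasize that the inertia matrix $B$ enters only as a multiplier of $\ddot y$ and is never inverted -- the reason flatness trades the forward-dynamics ODE for algebra -- and that the calibrated model of Sec.~\ref{sec2:piecewise} replaces $J^\top(q)$ by the diagonal $J_\lambda^\top$ (with $J_\lambda=I$ in simulation), giving the same construction with a trivially invertible coupling.

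For the final clause, dynamic feasibility is automatic: because $g$ solves \eqref{eq:q_dynamics} identically, every pair $(x(t),u(t))$ produced by $f$ and $g$ satisfies the dynamics by construction. Smoothness then reduces to showing $g$ is a smooth composition along $y(t)$. Through \eqref{eqn:BCGJ_q} the matrices $B$, $C$, and $J^\top$ depend on $q$ via $J_m$ and $\dot J_m$, whose entries (e.g.\ $L_{m,i}$ and the factors $\sin(q_i/2)/q_i$ of \eqref{eq:m_i(q)}) take the form $0/0$ at $q_i=0$; I would first dispatch these as \emph{removable} singularities by taking limits, so that $B$, $C$, $K$, $D$ are $C^\infty$ in $q$. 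The main obstacle is the remaining, genuine obstruction: the factor $\big(J^\top(y)\big)^{-1}$ exists and varies smoothly only where $J^\top(y)=J_m^\top(y)J_\xi^\top(m(y))$ retains full rank, and it blows up at poses where the rigid-equivalent Jacobian degenerates. Characterizing precisely these ``singular poses'' and proving smoothness of $g$ on their complement is the crux, and assumption~(3) is exactly the hypothesis that excludes them. With singular poses avoided, $u(t)$ is as smooth as $y(t)$ permits, two derivatives lower, so a $C^2$ flat-output trajectory already yields a continuous, dynamically feasible input, completing the argument.
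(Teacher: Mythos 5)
Your construction is correct and follows the same proof-by-construction route as the paper: the state map is the identity $x=[y^\top\;\dot y^\top]^\top$, and the input map is obtained by solving the manipulator equation \eqref{eq:q_dynamics} algebraically for the input. The one substantive difference is which coupling matrix you invert. The paper substitutes the calibrated actuation model $J^\top(q)\tau = J_\lambda^\top u$ from Sec.~\ref{sec2:piecewise} \emph{before} inverting, so the matrix to be inverted is the constant diagonal $J_\lambda$; assumption~(2) then gives invertibility immediately, and assumption~(3) is needed only so that the Jacobians entering $B$, $C$ in \eqref{eqn:BCGJ_q} stay well defined and smooth along the trajectory. You instead invert the configuration-dependent $J^\top(y)=J_m^\top(y)J_\xi^\top(m(y))$ and lean on assumption~(3) to guarantee its full rank, relegating the $J_\lambda$ form to a remark. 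Both are legitimate readings of the (admittedly ambiguous) identification of ``the input'' in the theorem, but the paper's division of labor is cleaner and makes assumption~(2) do the invertibility work rather than assumption~(3). Your additional observations --- that the $\sin(q_i/2)/q_i$ factors are removable singularities at $q_i=0$, and that $u(t)$ inherits two fewer derivatives than $y(t)$ so a $C^2$ flat output suffices for a continuous input --- are correct refinements that the paper's blanket claim $u\in\mathcal{C}^\infty$ glosses over.
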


\begin{proof}
    Choose $y(t)=q(t)$, then $\dot y = \dot q$ and $\ddot y = \ddot q$.
    We immediately have $x = f(y, \dot y) = [y^\top \; \dot y^\top]^\top$, satisfying eqn. (\ref{eq:f}).
    Next, by assumption (1), rearranging eqn. (\ref{eq:q_dynamics}) with $J^\top(q)\tau = J_\lambda^\top u$ gives
    
    \begin{equation}\label{eqn:u_diffflat}
        u = (J_\lambda^\top)^{-1}(B(y)\ddot{y} + C(y,\dot{y})\dot{y}+ Ky + D\dot{y}).
    \end{equation}

    \noindent By assumption (2), $J_\lambda$ is invertible and $u\in \mathbb{R}^n$ is the same dimension as $y$, so eqn. (\ref{eqn:u_diffflat}) serves as $g(\cdot)$ in eqn. (\ref{eq:g}), completing the flatness proof.
    Finally, applying assumption (3), all Jacobians in eqn. (\ref{eqn:BCGJ_q}) are singularity free for all values $q(t)$, true also for eqn. (\ref{eqn:u_diffflat}), therefore $u\in \mathcal{C}^\infty$.
\end{proof}
    % \noindent If it is possible to calculate $\ddot y(t)$ given $x(t)$, then (\ref{eqn:u_diffflat}) satisfies eqn. (\ref{eq:g}).
    % To do so, consider that \fix{Did we do a numerical difference approximation to calculate ddot q? We get a time series of $q$ joint angles from the inverse kinematics only, no second-time-derivative of their change} 
    
    % Therefore, both $f$ and $g$ are found, and for a given 
    % \fix{Drew is confused now.  Why do we need eqn. 15?}

    % \fix{It looks like we've proven something silly. Are all dynamics of the form of $\dot x = [q; something + u]$ differentially flat? Then it seems like everything that comes out of Lagranges' equations, with generalized forces as the control input, is flat. I am skeptical. Can we look more closely at the textbook and see how they do their flatness proof for a manipulator?}

\begin{remark}
The result of Theorem \ref{thm:df} is fairly obvious, and is essentially the same proof as feedback linearizability \cite{sira2004differentially} or even computed torque control, just from a different perspective.
For other related manipulator-like models of soft robots, e.g. \cite{della2020imporved}, it is likely true that these are differentially flat as well, subject to the condition of full actuation.
\end{remark}

\begin{remark}
Soft robots in general are underactuated; our result arises from the one-actuator-per-segment assumption that holds for wide classes of soft manipulators \cite{Hyatt2020Model}.
However, for soft robots when this is not true (e.g. \cite{Wertz2022Trajectory,pacheco_garcia_comparison_2023}), many flatness proofs from underactuated rigid robots \cite{agrawal_differentially_2008,Franch2010Differential} could be adapted.
\end{remark}

\section{REAL-TIME PCC SOFT ROBOT TRAJECTORY GENERATION}
\label{sec4:real}
Our problem statement seeks a trajectory of states and inputs $\{x_0,u_0,\hdots,x_t,u_T\}$ that correspond to a task-space trajectory.
Differential flatness gives us a $u_t$ corresponding to $x_t$ via eqn. (\ref{eqn:u_diffflat}), but we must additionally map a sequence of end-effector positions $r_t = \begin{bmatrix} r_{x_t} & r_{y_t} \end{bmatrix}^\top \in \mathbb{R}^{2}$ to $q_t$ and $\dot q_t$.
We propose a combined trajectory generation algorithm that calculates $q_t$ given $r_t$ via inverse kinematics, using a finite difference approximation for velocities $\dot q_t$ and accelerations $\ddot q_t$.
This simple proof-of-concept scheme demonstrates one way to combine kinematic planning with dynamic trajectory generation; higher performance is possible in future work.

% In this section, we begin by discussing our approach to using the soft robot's inverse kinematics to obtain our desired joint angles. 
% Next, we present our algorithm for real-time trajectory generation.
% We conclude by discussing the control input model which shows the relationship between the control input $\tau$ and the pressure for the actuation of the PCC soft robot. 

%%%%% Drew cuts:

% Therefore we solve for the desired joint angles $q_1$ and $q_2$ using the predefined desired end-effector positions and equations \eqref{eq:x} and \eqref{eq:y}. 
% ^^^ be careful not to refer to an equation before you state it.

% Since the above equations are nonlinear equations that are coupled, we solve them using numerical methods as discussed in the subsection below.
% ^^^ Usually it's easiest to wait for a given subsection until describing the method. It can feel out-of-context if you read about something before you see it for yourself.

\begin{figure*}
\centering
\includegraphics[width=1\textwidth]{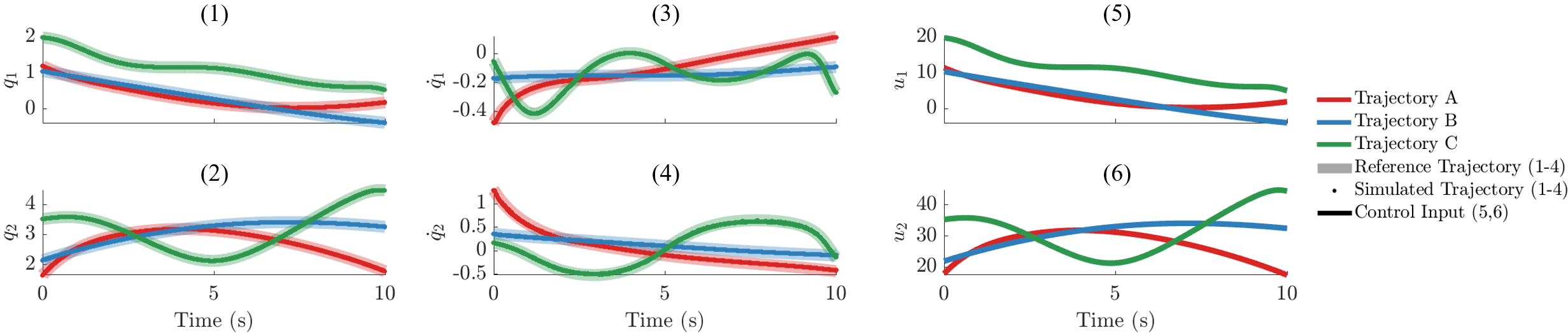}
\caption{With a sufficiently-small $\Delta t$, our method calculates open-loop trajectories that track the three reference trajectories A, B and C with low error.} \label{fig:jointangleplots}
\end{figure*}

\subsection{Inverse Kinematics}
\label{sec:InverseKinematics}

Any number of advanced inverse kinematics planners exist for soft robots \cite{Camarillo2009}, and could be used here.
However, our envisioned proof-of-concept robot contains only $n=2$ planar segments, so we can adapt a simple approach of numerically solving a system of nonlinear equations \cite{Wang2021Inverse}.

Taking two PCC segments' kinematics via eqn. (\ref{eq:forwardkinematics}), we obtain the transformation matrix $T_{12} = [R(q), \; r(q); \; 0, \; 1]$, where the tip position $r(q) \in \mathbb{R}^2$ is

% Using the $i$th forward kinematics transformation matrix \eqref{eq:forwardkinematics}, we derive an expression of the end effector position $r$ in terms of the joint angles $q$ for our two segment PCC soft robot manipulator. 

% \begin{multline}
% \begin{aligned}
% T_{12}&=T_0^1 \times T_1^2 \\
% &= \begin{bmatrix}
% \cos(q_1 + q_2) & -\sin(q_1 + q_2) & x \\
% \sin(q_1 + q_2) & \cos(q_1 + q_2) &  y\\
% 0 & 0 & 1 \end{bmatrix}
% \end{aligned}   
% \end{multline}

% where 

\begin{align}
    r_x = & L_1 \frac{\sin(q_1)}{q_1} + \frac{L_2}{q_2} \left( \sin(q_2)\cos(q_1) \right. \notag \\
    & \quad \quad \left. - \sin(q_1)(1-\cos(q_2)) \right) \label{eq:x}\\
    r_y = & L_1 \frac{1-\cos(q_1)}{q_1} + \frac{L_2}{q_2} \left( \sin(q_1)\sin(q_2) \right. \notag \\
    & \quad \quad \left. + \cos(q_1)(1-\cos(q_2)) \right) \label{eq:y}
\end{align}

% \begin{equation}
% \begin{aligned}
% \label{eq:x}
% x = L_1 \frac{\sin(q_1)}{q_1} + \frac{L_2}{q_2} \left( \sin(q_2)\cos(q_1) - \sin(q_1)(1-\cos(q_2)) \right) 
% \end{aligned}
% \end{equation} 
% \begin{equation}
% \begin{aligned}
% \label{eq:y}
% y = L_1 \frac{1-\cos(q_1)}{q_1} + \frac{L_2}{q_2} \left( \sin(q_1)\sin(q_2) + \cos(q_1)(1-\cos(q_2)) \right) 
% \end{aligned}
% \end{equation}

\noindent We seek values for $q_t$ given an $(r_{x_t},r_{y_t})$.
To do so, we define the difference between a desired $(r_x,r_y)$ tip pose versus that generated by a given $q$ by subtracting the right-hand sides of eqns. (\ref{eq:x}) and (\ref{eq:y}), as in, 

\begin{align}
    d_x(q) & = r_{x_t} - L_1 \sin(q_1)/q_1 \hdots (1-\cos(q_2)) \\
    d_y(q) & = r_{y_t} - L_1(1-\cos(q_1)/q_1 \hdots (1-\cos(q_2))
\end{align}

\noindent and then solve for $q_t$ such that

\begin{equation}\label{eqn:d_equals_0}
    d(q_t) = \begin{bmatrix}
    d_x(q_t) \\ d_y(q_t)
\end{bmatrix} = \begin{bmatrix}
    0 \\ 0
\end{bmatrix}
\end{equation}

\noindent We use MATLAB's built-in numerical solver for this purpose, e.g., {\texttt{qt = fsolve(d, qtminus1)}}, where $q_{t-1}$ is the pose for the previous timestep in the trajectory, used as the initial guess for the solver.
This converts a set of kinematics calculations into a motion planning problem.

Many solutions of $q$ exist for a time series trajectory of $r$, and so we must choose between them.
In this manuscript, we prompt based on the concavity of the robot (``curled counterclockwise or curled clockwise'') as desired, $q_{(-1)} = [\pi/2, \; \pi/2]^\top$ or $q_{(-1)} = [-\pi/2, \; -\pi/2]^\top$.

% The equations \eqref{eq:x} and \eqref{eq:y} give an expression for the end effector positions in terms of the flat outputs. The end effector position of our PCC soft robot can therefore be represented as $\mathbf{r} = \begin{bmatrix}
%     x&y
% \end{bmatrix}^T \in \mathbb{R}^{2}.$
% Therefore, to obtain an expression for our desired joint angles, we solve equations \eqref{eq:x} and \eqref{eq:y} using the Newton-Raphson method.
% \fix{More detail here. We should have something like $q = ...$}

\subsection{Algorithm For Real-Time Trajectory Generation}

Finally, we combine our differential flatness calculation with the inverse kinematics calculation to obtain Alg. (\ref{alg:one}).

One of the strengths of our method is that kinematic trajectories are automatically converted into dynamic trajectories.
This occurs because the inverse kinematics outputs $q_t$, but not $\dot q_t$ nor $\ddot q_t$.
We can therefore calculate $\dot q_t$ and $\ddot q_t$ as a function of the timestep in our trajectory, $\Delta t$, by a simple finite difference: 

\vspace{-0.5cm}
\begin{equation}\label{eqn:finitediff}
    \dot q_t = (q_t - q_{t-1})/\Delta t, \quad \quad \ddot q_t = (\dot q_t - \dot q_{t-1})/\Delta t 
\end{equation}

\noindent In practice, our method is sufficiently fast and discretization is sufficiently small (example, $\Delta t = 0.01$ sec) that numerical issues did not arise from this numerical differentiation.

% \begin{align}
%     \dot q_t = & 
% \end{align}

% version without bold:
\begin{algorithm}
\caption{Real-Time Trajectory Generation}\label{alg:one}
\KwIn{$r_0, \dots , r_T$}
\KwOut{$x_0, u_0, \dots , x_T, u_T$}
\For{$t = 0$ \KwTo $T$}{
    $q_t \gets \textbf{IK} (r_t)$  \quad \quad \quad \quad \quad \quad \quad \quad (eqn. \ref{eqn:d_equals_0}) \\
    $\dot q_t, \ddot q_t \gets \textbf{FiniteDiff} (q_{0\hdots t})$  \quad \quad \quad (eqn. \ref{eqn:finitediff}) \\
    $\{x_t, u_t\} \gets \textbf{DF}(q_t, \dot{q}_t, \ddot{q}_t)$ \quad \quad \quad (eqn. \ref{eqn:u_diffflat}) \\
}
\end{algorithm}

% \begin{algorithm}
% \caption{Real-Time Trajectory Generation}\label{alg:one}
% \KwIn{$\mathbf{r}_0, \dots , \mathbf{r}_T$}
% \KwOut{$\mathbf{x}_0, \mathbf{u}_0, \dots , \mathbf{x}_T, \mathbf{u}_T$}
% \For{$i = 0$ \KwTo $T-1$}{
%     $\mathbf{q}_i \gets \textbf{Inverse Kinematics} (\mathbf{r}_i)$ \eqref{eq:x} \eqref{eq:y} \\
%     $\{\mathbf{x}_i, \mathbf{u}_i\} \gets \textbf{Differential Flatness}(\mathbf{q}_i, \mathbf{\dot{q}}_i, \mathbf{\ddot{q}}_i)$ \eqref{eq:f_and_g} \\
% }
% \end{algorithm}

\section{SIMULATION VALIDATION}
\label{section:simulations}

We validate our approach in simulations of a planar two-segment soft robot manipulator, showing both its validity and speed of computation.

\subsection{Simulation Setup}
% Our manipulator has the piecewise constant curvature pose $q = [q_1 \; q_2]^\top$, and $u \in \mathbb{R}^2$.
To simulate, we form an ODE of $\dot x = f(x,u) = [\dot q^\top \; \ddot q^\top]^\top$, inverting eqn. (\ref{eq:q_dynamics}) as $\ddot q = B^{-1}(u - C \dot q - K q - D \dot q)$, noting again that we take $J_\lambda=I$ for proof-of-concept.
Given an $r_0,\hdots,r_T$, we execute Alg. \ref{alg:one}.
We integrate $f(x,u)$ via MATLAB's {\texttt{ode45}}, taking $u_0,\hdots,u_{T-1}$ as an open-loop input trajectory, starting from the initial condition $x_0$, and obtaining the observed trajectory of states $x(1),\hdots,x(T)$.
From these, we calculate $r(1), \hdots, r(T)$.

We calibrated the constants in eqn. (\ref{eq:q_dynamics}) approximately to a hardware prototype currently under development (Fig. \ref{fig:actualsoftrobot}), with $L_i = 12.8$ cm and $m_i = 0.072$ kg.
Stiffness and damping terms in $K$ and $D$ were tuned by hand. 

%\fix{tuned by hand - Akua: if anything better here, please write}.

The simulated trajectories of $r_0,\hdots r_T$ were designed by fitting a cubic spline through a set of carefully selected control points. 
Points along the spline were corresponding end-effector positions, which we sampled evenly for time discretization.
We chose three representative tip trajectories, A, B, and C, representing various reaching or obstacle avoidance maneuvers.
% \fix{some qualitative description of what these are: what would these kinds of tasks be? what was the robot supposed to do?}.
Each tip trajectory had a $\Delta t = 0.01$ sec and a total time of 10 sec.
These were chosen by inspection.
We note that these trajectories were designed to avoid singularities; future work will include more advanced motion planning and autonomous singularity-avoidance techniques.

%%%%% Drew cuts:

% The pre-defined desired end-effector positions form our reference trajectory. We generate the trajectories of joint angles that allow the PCC soft robot to reach its desired end-effector positions using differential flatness \eqref{eq:f_and_g}. Even though our desired end-effector positions are in the PCC soft robot's task space, we utilise the inverse kinematics to obtain the desired joint angles that correspond to the desired end-effector positions. 
% ^^^ This is repetitive from earlier. A good question to ask in general: if you find yourself repeating something, it's an indication that some section should be written differently. Here, observe how I've taken an "math-ified" the simulation description, which refers to previous equations but does not duplicate them.

% ...more general version of Akua's 2x2 derivation previously in the differential flatness section
% ^^^ I cut the M^{-1} calculation down to the first sentence of Sec. V-A. This would have been more clear if I'd gotten to the "invert the mass matrix" part of our ME500 class! We were just a bit too late. It's expected that our reviewers will know how to take the manipulator equation and turn it into a simulation.

\subsection{Simulation Results and Real-Time Performance}

% In order to validate that our PCC soft robot manipulator is differentially flat, we simulate its dynamics and apply our control input obtained via differential flatness at each time-step. 

Our results from the procedure above demonstrate that the robot's tip tracks each of the A, B, and C trajectories closely (Fig. \ref{fig:jointangleplots}).
The corresponding motions are physically realistic, e.g., Trajectory A plotted as a time lapse in Fig. \ref{fig:transparency}.
We calculate the open-loop task-space errors as $e(t) = || r_t - r(t)||$, the desired vs. observed tip pose.
The average tip position error for each trajectory is then $e_{avg} = (1/T)\sum_{t} e(t)$, in Table \ref{table1}.
These errors are on the order of micrometers, averaging to $5.9149 \times 10^{-5}$ meters, as expected for a simulation with no noise and no approximations in the trajectory generation method.
The only source of error is the velocity finite difference.

\begin{table}[bht]
\centering
\caption{Trajectory Generation And Open-Loop Control Results}
\label{table1}
\begin{tabular}{|c|c|c|}
\hline
\textbf{Trajectory} & \textbf{Average Tip Error } & \textbf{Real-Time Speedup Factor} \\
\hline
A & $7.6332 \times 10^{-5}$ m & 22.2x \\
\hline
B & $6.0634 \times 10^{-5}$ m & 24.3x \\
\hline
C & $4.0483 \times 10^{-5}$ m & 23.8x \\
\hline
\end{tabular}
\end{table}

During these simulations, we also timed Alg. \ref{alg:one} to evaluate its performance versus real-time.
The average time required for generating one point in the trajectory is $t_{avg} = \frac{1}{T} \sum_{t=1}^{T}(\hat{t}_{t+1}-\hat{t}_t)$ where T is the total number of timesteps, and the $\hat t$ are stopwatch times in between each iteration.
To calculate these, we timed $\hat{t}_t$ at the start (MATLAB's \texttt{tic}) and $\hat{t}_{t+1}$ after completion (MATLAB's \texttt{toc}) of each $t$-th iteration. 
The speedup factor over real-time is then $\Delta t/t_{avg}$, also given in Table \ref{table1}.
For $\Delta t = 0.01$ sec on an Apple Macbook Pro M2 at 3.5GHz and 16GB RAM, a representative $t_{avg}$ was 450 microseconds, leading to a speedup of approximately 23x.

\begin{figure}[!t]
\centering
\includegraphics[width=0.7\linewidth]{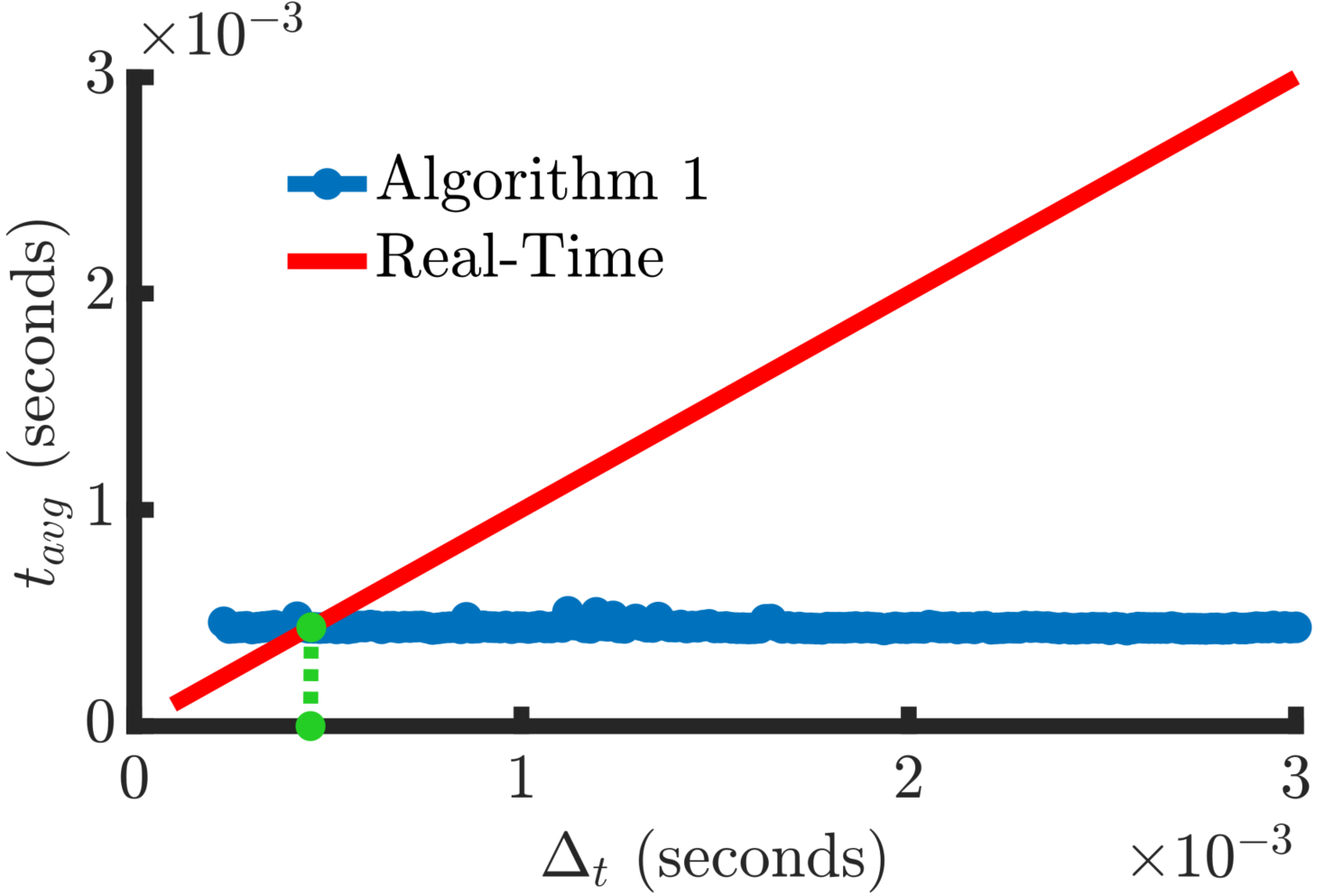}
\caption{Our procedure plans faster than real time whenever the average iteration time for Alg. 1 is less than the planning/control timestep. Our fastest real-time $\Delta t$ is 360 microseconds (green line), which to the authors' knowledge is significantly faster than any other soft robot trajectory generator with dynamic feasibility guarantees.} 
\label{fig:real-time}
% \vspace{-0.3cm}
\end{figure}

In addition to this real-time computation margin, we are interested in the smallest discretization timestep that still allows us to generate trajectories in real-time.
So, we repeated the simulation procedure for Trajectory A, now with varying $\Delta t$, from $\Delta t = 1\times 10^{-4}$ to $3\times 10^{-3}$, and re-timed $t_{avg}$ at each choice of $\Delta t$.
Fig. \ref{fig:real-time} shows this result, with effectively flat computation time per-iteration regardless of the trajectory timestep.
This is expected, as the computation does not change with $\Delta t$.
More importantly, we can observe that as long as our $\Delta t$ is greater than the crossover point of $\approx 0.00036$ sec, we achieve real-time computation.

% We compared  $t_{avg}$ to the timestep $\Delta_t$, where real-time occurs wherever $t_{avg} < \Delta_t$ as seen in Figure \ref{fig:real-time}. We compute the real-time speedup factor as $\frac{\Delta_t}{t_{avg}}$. We see that our trajectory generation approach plans the PCC soft robot's trajectories 23x faster than real-time on average, with an average tip position error of $5.9149 \times 10^{-5}$ as shown in Table \ref{table1}.  

Lastly, we confirmed that our results were not an artifact of our setup, or a bug in our calculation.
To do so, we note that the PCC robot's dynamics are asymptotically stable with positive definite $K$ and $D$ matrices \cite{2020DellaModel-Based}, and so if our robot were initialized outside the true $x_0$, it would ideally converge to the goal trajectory $q(t)$ after some amount of time, even without feedback.
We tested this by executing Trajectory A open-loop from three different initial conditions, with the joint angles $q_1(0)$ and $q_2(0)$ not aligned with the reference trajectory (Fig. \ref{fig:diffinitialpositions}).
Our results confirm the open-loop stability proof from \cite{2020DellaModel-Based}, as trajectories converge to the reference within a short time ($\sim$20 timesteps).

\begin{figure}[htb]
\centering
\includegraphics[width=1\linewidth]{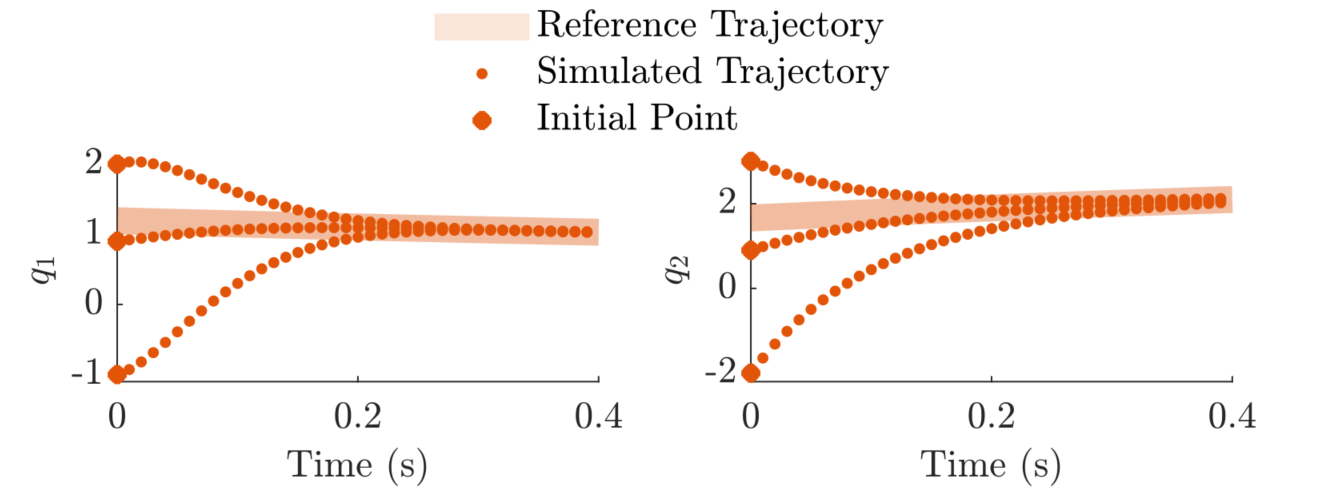}
\caption{We confirm that our results are not an artifact of our test setup by relying on the asymptotic stability of the PCC manipulator dynamics, and initializing the robot at a different set of joint angles, before executing the open-loop $u(0),\hdots,u(T)$ of Trajectory A. The simulation converges to the reference trajectory quickly despite no loop closure, validating that Alg. \ref{alg:one}'s calculated $\{x(t),u(t)\}_{0\hdots T}$ is feasible.} \label{fig:diffinitialpositions}
\end{figure}

% In our first experiment, we simulate the dynamics of our PCC soft robot manipulator along trajectory A using three different initial conditions. We plot the states of the soft robot during this simulation. We see from Figure \ref{fig:diffinitialpositions} that given any initial conditions, our control input steers our soft robot manipulator towards the desired trajectory A.  In a second experiment, using the same trajectory generation approach, and defining our initial conditions as the set of desired states at the first time-step, our soft robot manipulator exactly tracks our reference trajectory from the initial position to the desired position for three different trajectories A, B and C as seen in Figure \ref{fig:jointangleplots}. 

\section{DISCUSSION AND CONCLUSION}\label{sec:discusion}

This manuscript introduces a fast trajectory generation procedure for piecewise constant curvature soft robots.
We prove that PCC soft robot dynamics are differentially flat under some assumptions, and demonstrate that our method generates dynamically feasible trajectories much faster than real time.
To our knowledge, this is the first work to apply differential flatness to soft robots, and is a significant increase in computational performance of motion planning for soft manipulators that respects dynamic feasibility.

\begin{figure}[thb]
\centering
\includegraphics[width=0.8\linewidth]{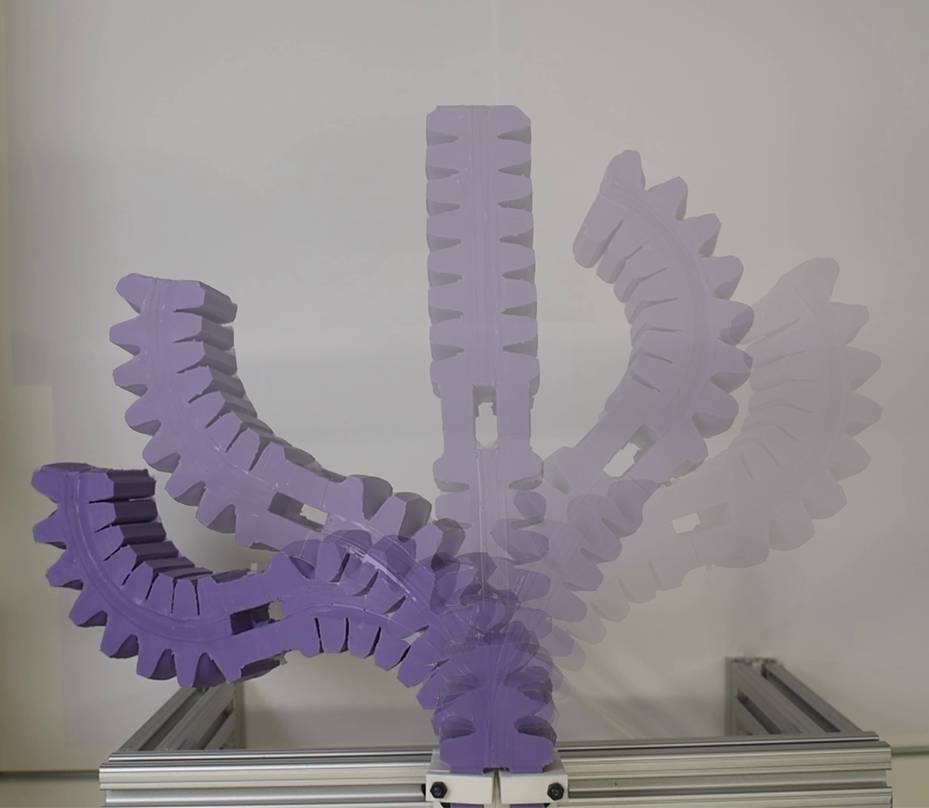}
\caption{Two-segment fully-actuated pneumatic soft robot manipulator prototype used for calibrating the simulation. Future work will implement our trajectory generation method in this hardware.} 
\vspace{-0.4cm}
\label{fig:actualsoftrobot}
\end{figure}

These results are attractive for many-segment three-dimensional soft robots, particularly with respect to feedback control.
One use for online motion planning is such a controller: replanning a motion at each timestep in closed-loop.
Typical prior work has used hardware control periods of 50-100 milliseconds \cite{Wertz2022Trajectory,patterson_robust_2022}, so this planning method has a factor of $\sim$100x margin of time for computation.
It is therefore likely that more complicated robots could plan motions in feedback in real time, overcoming the limitations of more complicated models, e.g. Cosserat rod models.
More generally, such a ``time margin'' could be viewed as a type of robustness in formal logic \cite{belta_formal_2019}, where a new control input must be recalculated within a period in order to maintain a safety specification.
Future work will examine these possibilities for soft robots.

% \fix{add something like - real time can be viewed as a form of safety with robustness margin such-and-such maybe cite Calin's past work and even state our 23x as a robustness margin rho...}

However, this proof-of-concept has limitations with respect to its simplicity.
While the assumptions inherent in the dynamics model are straightforward to satisfy for slender soft manipulators, our method currently does not account for input constraints, state constraints or environmental obstacles, or infeasible end effector positions.
Future work will address these, including motion planning for the tip trajectory itself.
And, our method only treats the task space goal as $r \in \mathbb{R}^2$ rather than a pose in $SE(3)$, which could be rectified by more advanced inverse kinematics.
Finally, future work will address singularities, possibly by adopting a singularity-free dynamics for PCC manipulators \cite{della2020imporved}.

Lastly, the proposed method is tested in simulation only.
The ``sim2real'' gap in soft robotics can be significant for many modeling approaches \cite{choi_dismech_2024}.
However, the PCC dynamics used here are well-validated against hardware \cite{Della2023Model}, and a prototype exists that meets the actuation assumptions for differential flatness.
We are therefore cautiously optimistic about the validity of our method.
Ongoing work will apply our approach to this hardware prototype, under development, in Fig. \ref{fig:actualsoftrobot}.

% \vspace{-0.2cm}
\bibliographystyle{IEEEtran}
% \vspace{-0.2cm}
\bibliography{references}
\end{document}